\documentclass[11pt,a4paper]{article}

\usepackage{comment}
\usepackage{float}
\usepackage{color}
\usepackage{multicol}
\usepackage{wrapfig}
\usepackage[pdftex]{graphicx}  
\usepackage{bm}
\usepackage{url}
\usepackage{colortbl}
\usepackage{tabularx}
\usepackage{fancyhdr}
\usepackage{ulem}
\usepackage{amsmath,amssymb,amsfonts,amsthm}
\usepackage{algorithmic}
\usepackage{textcomp}
\usepackage{xcolor}
\usepackage{cleveref}
\usepackage{amsthm} 
\usepackage{graphicx}
\usepackage{booktabs}
\usepackage{subcaption}  
\usepackage{enumitem}
\usepackage{caption}

\usepackage[numbers]{natbib}

\theoremstyle{definition}

\newtheorem{proposition}{Proposition}

\newtheorem{example}{Example}


\usepackage[top=30truemm,bottom=30truemm,left=25truemm,right=25truemm]{geometry}

\begin{document}
\begin{center}
{\Large Estimation of the Learning Coefficient Using Empirical Loss} \\
\vspace{2mm}
Tatsuyoshi Takio, Joe Suzuki
\\

\end{center}


\begin{abstract}
The learning coefficient plays a crucial role in analyzing the performance of information criteria, such as the Widely Applicable Information Criterion (WAIC) and the Widely Applicable Bayesian Information Criterion (WBIC), which Sumio Watanabe developed to assess model generalization ability. In regular statistical models, the learning coefficient is given by $d / 2$, where $d$ is the dimension of the parameter space. More generally, it is defined as the absolute value of the pole order of a zeta function derived from the Kullback-Leibler divergence and the prior distribution. However, except for specific cases such as reduced-rank regression, the learning coefficient cannot be derived in a closed form.

Watanabe proposed a numerical method to estimate the learning coefficient, which Imai further refined to enhance its convergence properties. These methods utilize the asymptotic behavior of WBIC and have been shown to be statistically consistent as the sample size grows.

In this paper, we propose a novel numerical estimation method that fundamentally differs from previous approaches and leverages a new quantity, "Empirical Loss," which was introduced by Watanabe. Through numerical experiments, we demonstrate that our proposed method exhibits both lower bias and lower variance compared to those of Watanabe and Imai. Additionally, we provide a theoretical analysis that elucidates why our method outperforms existing techniques and present empirical evidence that supports our findings.
\end{abstract}

\section{Introduction}
Evaluating the generalization ability and prediction performance of statistical models is crucial in various applications. Metrics such as generalization loss, empirical loss, and free energy serve as fundamental tools for this purpose, facilitating model selection and improving estimation accuracy. These metrics are closely related to the learning coefficient $\lambda$, which plays a central role in the theoretical analysis of information criteria such as the Widely Applicable Information Criterion (WAIC)~\cite{watanabe2010} and the Widely Applicable Bayesian Information Criterion (WBIC)~\cite{watanabe2013} proposed by Sumio Watanabe. These criteria provide a unified framework for evaluating statistical models and are particularly useful for addressing singular models, which commonly appear in real-world applications.

For regular statistical models, the learning coefficient is given by $\lambda=d / 2$, where $d$ is the dimension of the parameter space~\cite{watanabe2009algebraic}. However, many practical models exhibit singularities and fail to meet the standard regularity conditions required for conventional asymptotic analysis. In such cases, conventional information criteria such as AIC and BIC are no longer applicable, necessitating alternative approaches based on Watanabe's Bayesian theory. The learning coefficient $\lambda$ appears in the asymptotic expansions of free energy and generalization loss, directly influencing the accuracy of WAIC and WBIC. Therefore, precise estimation of $\lambda$ is essential for improving the reliability of these criteria.

Existing numerical methods for estimating the learning coefficient, including those proposed by Watanabe and subsequently refined by Imai, utilize the asymptotic behavior of WBIC. While these approaches are consistent, they suffer from limitations such as high bias and variance in small-sample scenarios. In this paper, we propose a novel estimation method that fundamentally differs from previous approaches and leverages a new quantity, "Empirical Loss," introduced by Watanabe~\cite{watanabe2010}. Through numerical experiments, we demonstrate that our method exhibits both lower bias and lower variance compared to conventional methods. Furthermore, our findings contribute to a deeper understanding of singular models and strengthen the theoretical foundation of WAIC and WBIC by offering a more accurate and computationally efficient method for estimating the learning coefficient.

\section{Motivation and Objectives of This Study}

The learning coefficient $\lambda$ plays a fundamental role in the theoretical analysis of metrics and information criteria used to assess the generalization ability of statistical models. In particular, the Widely Applicable Information Criterion (WAIC) and the Widely Applicable Bayesian Information Criterion (WBIC), proposed by Sumio Watanabe, provide a unified framework applicable to both regular and singular models. Accurate estimation of $\lambda$ is crucial for understanding the asymptotic properties of WAIC and WBIC and improving their practical reliability.

Numerical methods for estimating the learning coefficient include Watanabe's method~\cite{watanabe2013} and Imai's method~\cite{imai2019}. These methods employ Markov Chain Monte Carlo (MCMC) techniques to estimate the posterior distribution of parameters and to evaluate generalization loss, including WBIC. However, these methods face several challenges, including:

\subsection*{Challenges in Existing Methods}
\begin{itemize}
    \item \textbf{Watanabe's method}\\ This method requires selecting multiple inverse temperature values $\beta$, yet no established criterion exists for determining the optimal $\beta$. As a result, the estimation accuracy is highly sensitive to $\beta$, potentially introducing bias and necessitating manual tuning, thereby increasing computational cost.
    \item \textbf{Imai's method}\\ Although Imai's method improves upon Watanabe's approach, preliminary experiments suggest that its estimates are highly variable. This variability may stem from fluctuations in MCMC sampling and the specific choice of posterior variance estimation. Alternative approaches could potentially achieve more stable and accurate estimation.
\end{itemize}  

\subsection*{Objectives and Contributions of This Study}
To address these issues, we propose a novel numerical estimation method based on a new quantity called "Empirical Loss"~\cite{watanabe2010}, introduced by Watanabe. This approach offers several advantages:
\begin{itemize}
    \item \textbf{Lower bias and variance}\\ Compared to existing methods, our estimator demonstrates improved stability in numerical experiments.
    \item \textbf{Reduced parameter selection burden}\\ Our approach eliminates the need for manual tuning of $\beta$, enhancing practical usability.
\end{itemize}
In this study, we develop this new estimator and demonstrate through theoretical analysis and numerical experiments that it achieves lower variance and mean squared error (MSE) compared to existing methods. Furthermore, we analyze the impact of MCMC sampling errors to improve the practical applicability of learning coefficient estimation in singular models.

\section{Learning Coefficient}
The learning coefficient $\lambda$ plays a fundamental role in evaluating the generalization ability of statistical models, particularly in the analysis of information criteria such as WAIC and WBIC. It characterizes the asymptotic behavior of the free energy, generalization loss, and other key quantities in Bayesian learning theory. This section provides a formal definition of the learning coefficient and examines its theoretical properties.
\subsection{Notation and Definition of the Learning Coefficient}
Let \(\Theta \subset \mathbb{R}^d\) denote the parameter space, and let $\mathcal{X}$ denote the sample space. For \(\theta \in \Theta\) and \(x \in \mathcal{X}\), let \(q(x)\) denote the distribution of the random variable \(X \in \mathcal{X}\) (hereafter referred to as the true distribution), \(p(x\mid\theta)\) the statistical model, and \(\varphi(\theta)\) the prior distribution. First, for a sample \(x_1,\dots,x_n \in \mathcal{X}\), we define the marginal likelihood, posterior distribution, and predictive distribution respectively by
$$
\begin{aligned}
Z_n(x_1, \dots, x_n) &:= \int_\Theta p(x_1 \mid \theta) \cdots p(x_n \mid \theta)\,\varphi(\theta)\,d\theta,\\[1mm]
p(\theta \mid x_1, \dots, x_n) &:= \frac{1}{Z_n}\,\varphi(\theta)\prod_{i=1}^n p(x_i \mid \theta),\\[1mm]
r\Bigl(x \mid x_1, \ldots, x_n\Bigr) &:= \int_{\Theta} p(x \mid \theta)\,p\Bigl(\theta \mid x_1, \ldots, x_n\Bigr)d\theta = \frac{Z_n\Bigl(x_1, \ldots, x_n, x\Bigr)}{Z_n\Bigl(x_1, \ldots, x_n\Bigr)}.
\end{aligned}
$$
In general, the posterior expectation of a function \(s:\Theta\rightarrow \mathbb{R}\) given the sample \(x_1,\ldots,x_n\) is written as
\begin{align}
\mathcal{E}_\theta\Bigl[s(\theta) \mid x_1,\dots,x_n\Bigr] := \int_\Theta s(\theta)\,p(\theta \mid x_1, \dots, x_n)\,d\theta \label{E}
\end{align}
For example,
$$
r\Bigl(x \mid x_1, \ldots, x_n\Bigr)=\mathcal{E}_\theta\Bigl[p(x\mid\theta) \mid x_1,\dots,x_n\Bigr].
$$
Furthermore, the empirical loss employed in our method, described later, is defined as
\begin{equation}\label{tn}
T_n := \frac{1}{n}\sum_{i=1}^n\Bigl\{-\log r\Bigl(x_i \mid x_1, \ldots, x_n\Bigr)\Bigr\}.
\end{equation}
WAIC is defined using empirical loss. Moreover, if the sample average in the definition of empirical loss is replaced by the expectation with respect to \(X\), we obtain the generalization loss
$$
G_n := \mathbb{E}_X\Bigl[-\log r\Bigl(X \mid x_1, \ldots, x_n\Bigr)\Bigr],
$$
which is the measure used to justify WAIC. In addition, the free energy
\begin{align}
F_n := -\log Z_n(x_1, \dots, x_n) \label{Fn}
\end{align}
is closely related to WBIC (Widely Applicable Bayesian Information Criterion), satisfying
$$
F_n = WBIC_n\Bigl(\frac{1}{\log n}\Bigr) + O_P\Bigl(\sqrt{\log n}\Bigr)
$$
(see~\cite{watanabe2013}). Furthermore, let the Kullback–Leibler divergence \(K:\Theta\rightarrow \mathbb{R}\), defined by the true distribution and the statistical model, be given by
$$
K(\theta) := \int_\mathcal{X} \log \Bigl(\frac{q(x)}{p(x \mid \theta)}\Bigr) q(x)\,dx,
$$
and define the corresponding zeta function as
$$
\zeta(z) := \int_\Theta K(\theta)^z\,\varphi(\theta)\,d\theta.
$$
The learning coefficient \(\lambda\) is defined as absolute order of the highest pole of this zeta function. Thus, the learning coefficient is determined by the triplet comprising the prior distribution \(\varphi(\cdot)\), the true distribution \(q(\cdot)\), and the statistical model \(p(\cdot\mid\cdot)\).

\begin{example}
Consider the setting in which \(p(\cdot \mid \theta)\) and \(q(\cdot)\) follow the normal distributions \(N(\theta, 1)\) and \(N(0, 1)\), respectively, and suppose the prior distribution \(\varphi(\cdot)\) is given by
$$
\varphi(\theta)= \begin{cases}
\frac{1}{2}, & \text{if } |\theta| \leq 1, \\
0, & \text{otherwise}.
\end{cases}
$$
In this case, the Kullback–Leibler divergence \(K(\theta)\) is expressed as
$$
K(\theta) = \int_{-\infty}^{\infty} \frac{1}{\sqrt{2\pi}} \exp\Bigl(-\frac{x^2}{2}\Bigr) \Bigl\{\frac{(x-\theta)^2}{2} - \frac{x^2}{2}\Bigr\} dx = \frac{\theta^2}{2},
$$
and consequently, the zeta function becomes
$$
\zeta(z) = \int_{-1}^{1} \Bigl(\frac{\theta^2}{2}\Bigr)^z \frac{1}{2}\,d\theta = \frac{1}{2^z (2z+1)} \cdot \frac{1 - (-1)^{2z+1}}{2}.
$$
Therefore, \(\lambda = 1/2\).
\end{example}

\subsection{Learning Coefficient in Regular and Singular Cases}
Let the optimal parameter set \(\Theta_*\) as the set of parameters \(\theta \in \Theta\) that minimize \(K(\theta)\). The true distribution \(q\) and the statistical model \(\{p(\cdot \mid \theta)\}_{\theta \in \Theta}\) are said to be in a regular relationship if the following three conditions hold:
\begin{enumerate}
    \item There exists a unique \(\theta_* \in \Theta_*\) such that \(\Theta_* = \{\theta_*\}\).
    \item The matrix
    \[
    \left[\left.\frac{\partial^2 K(\theta)}{\partial \theta_i \partial \theta_j}\right|_{\theta=\theta_*}\right] \in \mathbb{R}^{d \times d}
    \]
    is positive definite.
    \item There exists an open set \(\tilde{\Theta} \subseteq \Theta\) containing \(\theta_*\).
\end{enumerate}
Furthermore, under regularity, the following fact is known:

\begin{proposition}
Assume that the statistical model and the true distribution are in a regular relationship and realizable. Moreover, if \(\varphi(\theta_*) > 0\), then
$$
\lambda = \frac{d}{2}
$$
holds~\cite{watanabe2009algebraic}.
\end{proposition}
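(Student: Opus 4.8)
The plan is to read off the largest pole of the zeta function $\zeta(z)=\int_\Theta K(\theta)^z\varphi(\theta)\,d\theta$ directly from the local geometry of $K$ near its minimizer. Realizability means $q(x)=p(x\mid\theta_*)$ for the parameter $\theta_*$ supplied by condition~1, so $K(\theta_*)=0$; since $K\ge 0$, $\theta_*$ is the unique global minimizer and $\min_\Theta K=0$. Hence for every neighborhood $U$ of $\theta_*$ we have $c_U:=\inf_{\Theta\setminus U}K>0$, and on $\Theta\setminus U$ the integrand satisfies $|K(\theta)^z|\le\max(c_U^{\Re z},(\sup K)^{\Re z})$; it follows that $\int_{\Theta\setminus U}K(\theta)^z\varphi(\theta)\,d\theta$ is holomorphic on all of $\mathbb{C}$. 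Therefore every pole of $\zeta$ comes from the local integral $\zeta_U(z):=\int_U K(\theta)^z\varphi(\theta)\,d\theta$, and it suffices to analyse $\zeta_U$.

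Next I would normalize $K$ near $\theta_*$. By conditions~2 and~3, $\theta_*$ is an interior nondegenerate critical point of $K$, so the Morse lemma furnishes a $C^\infty$ (indeed real-analytic, under the standing analyticity assumptions of Watanabe's framework) change of coordinates $y=\phi(\theta)$ on a neighborhood $U$ with $\phi(\theta_*)=0$ and $K(\theta)=\lVert y\rVert^2=y_1^2+\cdots+y_d^2$. Pushing the integral forward, $\zeta_U(z)=\int_{\lVert y\rVert<\varepsilon}\lVert y\rVert^{2z}\,\psi(y)\,dy$ with $\psi(y)=\varphi(\phi^{-1}(y))\,\lvert\det D\phi^{-1}(y)\rvert\ge 0$ and $\psi(0)=\varphi(\theta_*)\,\lvert\det D\phi^{-1}(0)\rvert>0$, the strict inequality using the hypothesis $\varphi(\theta_*)>0$. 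Passing to polar coordinates $y=r\omega$ gives $\zeta_U(z)=\int_0^\varepsilon r^{2z+d-1}h(r)\,dr$ with $h(r):=\int_{S^{d-1}}\psi(r\omega)\,d\omega$ and $h(0)=\operatorname{vol}(S^{d-1})\,\psi(0)>0$. Splitting off the leading term, $\zeta_U(z)=h(0)\,\dfrac{\varepsilon^{2z+d}}{2z+d}+\int_0^\varepsilon r^{2z+d-1}\bigl(h(r)-h(0)\bigr)\,dr$: the first summand is meromorphic on $\mathbb{C}$ with a single simple pole at $z=-d/2$, and a short estimate (split the $r$-integral at a small $\rho$ and use $h(r)\to h(0)$) shows the second summand extends holomorphically to a neighborhood of $z=-d/2$. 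Hence $\zeta_U$, and therefore $\zeta$, is holomorphic for $\Re z>-d/2$ and has a simple pole at $z=-d/2$ with residue $\tfrac12 h(0)>0$.

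Since $-d/2$ is thus the pole of $\zeta$ with largest real part, the definition of the learning coefficient yields $\lambda=\lvert -d/2\rvert=d/2$. The step I expect to be the main obstacle is the technical bookkeeping needed to make the reductions airtight: justifying the localization (that the part of the integral away from $\theta_*$ is entire) and, above all, showing that $\int_0^\varepsilon r^{2z+d-1}(h(r)-h(0))\,dr$ carries no pole at $z=-d/2$, so that the leading pole is not cancelled — this is precisely where $\varphi(\theta_*)>0$ is used, since it guarantees $h(0)>0$ and hence a nonzero residue. (In the general singular setting the Morse lemma is unavailable and one must instead invoke Hironaka's resolution of singularities; in the regular case treated here the quadratic normal form makes everything explicit.)
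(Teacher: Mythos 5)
The paper does not actually prove this proposition --- it is quoted as a known result from Watanabe's book \cite{watanabe2009algebraic} --- so there is no in-paper argument to compare against; your proposal supplies the standard proof, and its overall architecture (localization to a neighborhood of $\theta_*$, Morse normal form $K=\lVert y\rVert^2$, polar coordinates, identification of the simple pole at $z=-d/2$ with residue $\tfrac12 h(0)>0$) is correct and is exactly how this is done in the regular case. You also correctly isolate the role of each hypothesis: realizability gives $K(\theta_*)=0$, conditions 2--3 give the nondegenerate interior minimum needed for the Morse chart, and $\varphi(\theta_*)>0$ guarantees the residue does not vanish, so the pole at $-d/2$ is genuinely the one of largest real part.

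One step is under-justified as written: you claim that $\int_0^\varepsilon r^{2z+d-1}\bigl(h(r)-h(0)\bigr)\,dr$ extends holomorphically past $z=-d/2$ ``using $h(r)\to h(0)$.'' Mere continuity of $h$ at $0$ is not enough --- at $z=-d/2$ the kernel is $r^{-1}$, and for instance $h(r)-h(0)=1/\log(1/r)$ is continuous with limit $0$ yet makes the integral diverge, so no holomorphic extension to a neighborhood of $-d/2$ exists. What you actually need is a rate, $h(r)-h(0)=O(r^\alpha)$ for some $\alpha>0$, which makes the remainder absolutely convergent and holomorphic on $\Re z>-(d+\alpha)/2$. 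This is available for free under the standing assumptions of Watanabe's framework (prior density $C^\infty$ on a neighborhood of $\theta_*$, $C^1$ Morse chart), since then $\psi$ is $C^1$ and $h(r)=h(0)+O(r)$; you should say so explicitly rather than appeal to convergence alone. A similar (minor) caveat applies to the localization step, where the bound via $\sup K$ implicitly assumes $K$ bounded on the support of $\varphi$ --- again standard under compact support of the prior, but worth stating. With these two points patched the proof is complete.
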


Here, the true distribution \(q\) is considered realizable under the statistical model \(\{p(\cdot \mid \theta)\}_{\theta \in \Theta}\) if there exists some \(\theta \in \Theta\) such that \(p(\cdot \mid \theta)=q(\cdot)\). In Watanabe's Bayesian theory the framework is developed under a more generalized condition than regularity, namely that of having relatively finite variance. Specifically, the statistical model \(\{p(\cdot \mid \theta)\}_{\theta \in \Theta}\) is said to have relatively finite variance with respect to the true distribution \(q\) if, for every \(\theta_* \in \Theta_*\), there exists a constant \(c > 0\) such that for any \(\theta \in \Theta\)
$$
\mathbb{E}_X\left[\left\{\log \frac{p\left(X \mid \theta_*\right)}{p(X \mid \theta)}\right\}^2\right] \leq c\,\mathbb{E}_X\left[\log \frac{p\left(X \mid \theta_*\right)}{p(X \mid \theta)}\right]
$$
holds. Moreover, if the true distribution \(q\) and the statistical model \(\{p(\cdot \mid \theta)\}_{\theta \in \Theta}\) are in a regular relationship, then the model is known to have relatively finite variance, and if it is realizable, it also has relatively finite variance. These three relationships are illustrated in Figure~\ref{image0}.
\begin{figure}[htbp]
\centering
\begin{minipage}[b]{0.4\columnwidth}
    \centering
    \includegraphics[width=0.95\columnwidth]{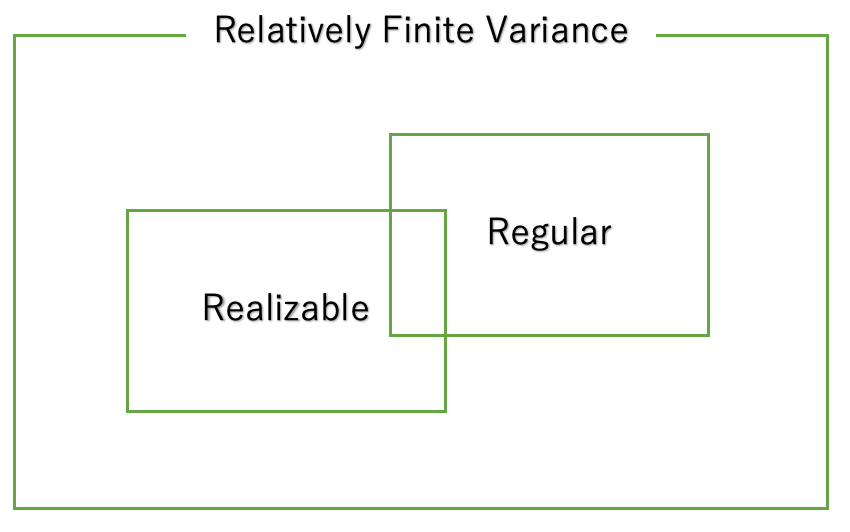}
\end{minipage}
    \caption{}
    \label{image0}
\end{figure}
Here, a model that is not regular but has relatively finite variance is referred to as a singular model. We now present several examples in which the learning coefficient for singular models has been determined.

\begin{example}
Suppose that the true distribution \(q(\cdot)\) follows \(N(0, 1)\) and that the statistical model \(p(\cdot \mid \alpha,\mu_1,\mu_2)\) is given by the two-component Gaussian mixture
\[
\alpha\,N\bigl(\mu_1, 1\bigr) + (1-\alpha)\,N\bigl(\mu_2, 1\bigr).
\]
In this case, the model is singular and the learning coefficient \(\lambda\) is \(3/4\)~\cite{Aoyaginormal}.
\end{example}

\begin{example}
Consider a mixture Poisson distribution \(f\) with \(m\) components defined using positive constants \(\lambda_i\) \((i = 1,\dots, m)\) and mixture proportions \(\theta_i\) \((0<\theta_i<1,\ \sum_{i=1}^m \theta_i=1)\) as
\[
f(x\mid \theta_1,\dots,\theta_m,\lambda_1,\dots,\lambda_m) := \sum_{i=1}^{m} \theta_i\,\mathbf{Po}\Bigl(x \mid \lambda_i\Bigr),\quad x\in \mathbb{Z}_{\geq 0}.
\]
Let the true distribution \(q\) be a mixture with \(r\) components and let the statistical model \(p\) be a mixture with \(H\) components. Then the learning coefficient for this model is given by
\[
\frac{3r + H - 2}{4},
\]
as shown in~\cite{mixpoisson}.
\end{example}

\section{Main Methods for Estimating the Learning Coefficient}
This section presents key methods for estimating the learning coefficient \(\lambda\) using WBIC. We first provide a theoretical foundation for WBIC and then describe the estimation methods proposed by Watanabe and Imai.
\subsection{WBIC}
The posterior distribution at inverse temperature $\beta>0$ is given by
\[
p_\beta(\theta \mid x_1, \dots, x_n) \;:=\; \frac{\varphi(\theta)\prod_{i=1}^n p(x_i \mid \theta)^\beta}{\displaystyle \int_\Theta \varphi(\theta)\prod_{i=1}^n p(x_i \mid \theta)^\beta\,d\theta}\,.
\]
For any function \(s:\Theta \to \mathbb{R}\), we define its posterior expectation and posterior variance at inverse temperature \(\beta>0\) as
\[
\mathcal{E}^\beta_\theta\Bigl[s(\theta) \mid x_1,\dots,x_n\Bigr] \;:=\; \int_\Theta s(\theta)\,p_\beta(\theta \mid x_1,\dots,x_n)\,d\theta,
\]
\[
\mathcal{V}^\beta_\theta\Bigl[s(\theta) \mid x_1,\dots,x_n\Bigr] \;:=\; \mathcal{E}^\beta_\theta\Bigl[s(\theta)^2 \mid x_1,\dots,x_n\Bigr] - \Bigl(\mathcal{E}^\beta_\theta\Bigl[s(\theta) \mid x_1,\dots,x_n\Bigr]\Bigr)^2\,.
\]
Furthermore, WBIC at inverse temperature $\beta>0$ is defined as
\[
WBIC_n(\beta) \;:=\; \mathcal{E}^\beta_\theta\Bigl[-\sum_{i=1}^n \log p(x_i \mid \theta)\Bigr]\,.
\]
The following asymptotic property is fundamental for estimating $\lambda$ ~\cite{watanabe2013}.

\begin{proposition}\label{WBIC}
Let \(\beta_0>0\) be an arbitrary constant. Then, when the inverse temperature is set to \(\beta = \beta_0/\log n\), it holds that
\[
WBIC_n(\beta) \;=\; \sum_{i=1}^n\bigl[-\log p(x_i \mid \theta_*)\bigr] \;+\; \frac{\lambda \log n}{\beta_0} \;+\; U_n \sqrt{\frac{\lambda \log n}{2 \beta_0}} \;+\; O_p(1)\,.
\]
Here, \(E_{X_1,\dots,X_n}[U_n] = 0\) and \(U_n\) converges in distribution to \(N(0,v)\) as \(n \to \infty\), where \(v > 0\) is a positive constant.
\end{proposition}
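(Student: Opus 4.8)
The plan is to follow the route of Watanabe's singular learning theory: reduce $WBIC_n(\beta)$ to a $\beta$-derivative of a normalized log-partition function, read off the leading $\lambda\log n/\beta_0$ term from the pole structure of $\zeta$, and extract the Gaussian fluctuation from an empirical-process expansion obtained after resolution of singularities. Setting $Z_n(\beta):=\int_\Theta\varphi(\theta)\prod_{i=1}^n p(x_i\mid\theta)^{\beta}\,d\theta$, differentiation under the integral sign gives $WBIC_n(\beta)=-\frac{d}{d\beta}\log Z_n(\beta)$. Factoring $\prod_i p(x_i\mid\theta_*)^{\beta}$ out of $Z_n(\beta)$ and writing $L_n(\theta):=\frac1n\sum_{i=1}^n\log\frac{p(x_i\mid\theta_*)}{p(x_i\mid\theta)}$ and $F_n^0(\beta):=-\log\int_\Theta\varphi(\theta)\,e^{-\beta nL_n(\theta)}\,d\theta$, one gets
\[
WBIC_n(\beta)=\sum_{i=1}^n\bigl[-\log p(x_i\mid\theta_*)\bigr]+\frac{d}{d\beta}F_n^0(\beta),
\]
so it remains to analyze $\frac{d}{d\beta}F_n^0(\beta)$ at $\beta=\beta_0/\log n$. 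The deterministic skeleton is the Tauberian/Mellin fact that $\int_\Theta\varphi(\theta)\,e^{-tK(\theta)}\,d\theta\sim c\,t^{-\lambda}(\log t)^{m-1}$ as $t\to\infty$, where $m$ is the order of the pole of $\zeta(z)$ at $z=-\lambda$; this follows by writing the integral as $\int_0^\infty e^{-ty}v(y)\,dy$ for the state density $v(y)=\int_\Theta\varphi(\theta)\,\delta\bigl(y-K(\theta)\bigr)\,d\theta$ and transferring the pole of $\zeta$ to the small-$y$ behavior of $v$. Plugging $t=\beta n$, taking $-\log$, and then $\frac{d}{d\beta}$ yields $\lambda/\beta=\lambda\log n/\beta_0$ at $\beta=\beta_0/\log n$, while the $(\log t)^{m-1}$ factor contributes only $O(1)$ after differentiation.

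For the fluctuation one needs the stochastic refinement. Apply Hironaka's resolution of singularities to $K$: there exist a real-analytic manifold $M$ and a proper map $g:M\to\Theta$ such that in each local chart $K(g(u))=u_1^{2k_1}\cdots u_d^{2k_d}$ and $\varphi(g(u))\,|g'(u)|=|u_1^{h_1}\cdots u_d^{h_d}|\,b(u)$ with $b>0$ smooth, recovering $\lambda=\min_j(h_j+1)/(2k_j)$ with $m$ the corresponding multiplicity. Relative finite variance controls the empirical process: with $f_\theta(x):=\log\frac{p(x\mid\theta_*)}{p(x\mid\theta)}$ one has $\mathbb{E}_X[f_\theta]=K(\theta)$ and $\mathrm{Var}_X[f_\theta]\le c\,K(\theta)$, so $\sqrt n\bigl(L_n(\theta)-K(\theta)\bigr)=\sqrt{K(\theta)}\,\eta_n(\theta)$ with $\eta_n$ tight and converging in law to a mean-zero Gaussian process; in resolved coordinates, $nL_n(g(u))=nu^{2k}-\sqrt n\,|u^{k}|\,\xi_n(u)$. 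Substituting into $F_n^0$, rescaling to the standard scale (schematically $v=(\beta n)^{1/(2k)}u$) chart-by-chart against a partition of unity, localizing near the singular locus, and expanding $\exp\bigl(\sqrt\beta\,|v^{k}|\,\xi_n\bigr)$ to first order gives
\[
\int_\Theta\varphi(\theta)\,e^{-\beta nL_n(\theta)}\,d\theta=(\beta n)^{-\lambda}\bigl(\log(\beta n)\bigr)^{m-1}\Bigl(C_0+\sqrt\beta\,C_1\,\xi_n^{\ast}+O_p(\beta)\Bigr)
\]
for explicit constants $C_0,C_1$, with $\xi_n^{\ast}$ the value of $\xi_n$ on the singular set. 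Taking $-\log$, differentiating in $\beta$, and setting $\beta=\beta_0/\log n$ turns the $\sqrt\beta\,\xi_n^{\ast}$ contribution into one of order $\xi_n^{\ast}/\sqrt\beta=\xi_n^{\ast}\sqrt{\log n/\beta_0}$; identifying the constants (a second-moment computation for the rescaled process produces the factor $2$) matches it to $U_n\sqrt{\lambda\log n/(2\beta_0)}$ with $U_n$ a fixed multiple of $\xi_n^{\ast}$. Hence $\mathbb{E}_{X_1,\dots,X_n}[U_n]=0$ (from $\mathbb{E}_X[f_\theta-K(\theta)]=0$) and $U_n$ converges in distribution to $N(0,v)$ with $v>0$ inherited from the limiting Gaussian process, all remaining contributions being $O_p(1)$.

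I expect the main obstacle to be the uniform validity of the empirical-process expansion near the singular set $\{K=0\}$: there the control $\mathrm{Var}_X[f_\theta]\le c\,K(\theta)$ degenerates, and the Donsker/bracketing machinery must be applied with care — this is exactly the technical heart of Watanabe's theory and is where relative finite variance is indispensable. A second, subtler point is legitimizing the interchange of $\frac{d}{d\beta}$ with the asymptotic expansion, since one may not differentiate an $O_p(\sqrt{\log n})$ remainder for free; this can be handled by controlling the remainder in a $\beta$-differentiable norm (uniform integrability of $|v^{k}|^{j}e^{-v^{2k}}e^{\sqrt\beta|v^{k}|\xi_n}$) together with the concavity of $\beta\mapsto F_n^0(\beta)$, which squeezes its derivative between difference quotients. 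Since the statement coincides with Watanabe's result, one could alternatively just invoke \cite{watanabe2013}; the sketch above indicates how that argument is organized.
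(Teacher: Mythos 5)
The paper does not prove this proposition at all: it is quoted verbatim from Watanabe's WBIC paper and justified only by the citation \cite{watanabe2013}, so there is no in-paper argument to compare against. Your sketch is a faithful reconstruction of how the cited proof is actually organized, and the elementary steps you carry out are correct: $WBIC_n(\beta)=-\tfrac{d}{d\beta}\log Z_n(\beta)$, the reduction to $\tfrac{d}{d\beta}F_n^0(\beta)$ after factoring out $\prod_i p(x_i\mid\theta_*)^\beta$, the Mellin--Tauberian extraction of $\lambda\log n/\beta_0$ from the pole of $\zeta$ at $z=-\lambda$ (with the $(\log t)^{m-1}$ factor correctly relegated to $O(1)$ after differentiation), the standard form $nL_n(g(u))=nu^{2k}-\sqrt{n}\,|u^k|\,\xi_n(u)$ from resolution of singularities plus relatively finite variance, and the order count showing the fluctuation term is $O_p(\sqrt{\log n})$. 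The concavity of $F_n^0$ (second derivative equal to minus a posterior variance) is also the right lever for passing from the expansion of $F_n^0$ to that of its derivative. What you present remains a roadmap rather than a proof: the two places you yourself flag --- uniform (Donsker/bracketing) control of $\xi_n$ near $\{K=0\}$, and the justification for differentiating the asymptotic expansion term by term --- are precisely the technical core of \cite{watanabe2013}, and they are not executed here; likewise the identification of the exact coefficient $\sqrt{\lambda/(2\beta_0)}$ in front of $U_n$ is asserted ("a second-moment computation produces the factor $2$") rather than derived. As a self-contained proof it is therefore incomplete, but as an account of the structure of the result the paper invokes, it is accurate, and your closing observation that one may simply cite \cite{watanabe2013} is exactly what the paper does.
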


The methods for estimating the learning coefficient proposed by Watanabe and Imai are based on this asymptotic property of WBIC.

\subsection{Watanabe's Method}
\begin{proposition}
Let \(\beta_{01}, \beta_{02} > 0\) be constants, and define $\beta_1 := \beta_{01}/\log n, \beta_2 := \beta_{02}/\log n$. Then,
\[
\hat{\lambda}_W := \frac{WBIC_n(\beta_1) - WBIC_n(\beta_2)}{1/\beta_1 - 1/\beta_2}
\]
is a consistent estimator of the learning coefficient \(\lambda\)~\cite{watanabe2013}.
\end{proposition}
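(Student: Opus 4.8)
The plan is to apply Proposition~\ref{WBIC} twice, once with $\beta_0 = \beta_{01}$ and once with $\beta_0 = \beta_{02}$, and then analyze the difference. Writing $U_n^{(1)}, U_n^{(2)}$ for the (sample-dependent) fluctuation terms attached to $\beta_{01}$ and $\beta_{02}$ respectively, and using $1/\beta_i = (\log n)/\beta_{0i}$, the expansion reads
\[
WBIC_n(\beta_i) = \sum_{j=1}^n \bigl[-\log p(x_j \mid \theta_*)\bigr] + \frac{\lambda}{\beta_i} + U_n^{(i)}\sqrt{\frac{\lambda}{2\beta_i}} + O_p(1), \qquad i = 1,2.
\]
First I would subtract the $i=2$ expansion from the $i=1$ expansion. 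The leading term $\sum_j[-\log p(x_j\mid\theta_*)]$ is identical in both instances and cancels exactly, leaving
\[
WBIC_n(\beta_1) - WBIC_n(\beta_2) = \lambda\left(\frac{1}{\beta_1} - \frac{1}{\beta_2}\right) + U_n^{(1)}\sqrt{\frac{\lambda}{2\beta_1}} - U_n^{(2)}\sqrt{\frac{\lambda}{2\beta_2}} + O_p(1).
\]

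Next, using the standing assumption $\beta_{01}\neq\beta_{02}$ (without which the estimator is undefined), we have $1/\beta_1 - 1/\beta_2 = (\log n)(1/\beta_{01} - 1/\beta_{02}) \neq 0$, so dividing through by this quantity gives
\[
\hat\lambda_W = \lambda + \frac{U_n^{(1)}\sqrt{\lambda/(2\beta_1)} - U_n^{(2)}\sqrt{\lambda/(2\beta_2)}}{(\log n)(1/\beta_{01} - 1/\beta_{02})} + \frac{O_p(1)}{(\log n)(1/\beta_{01} - 1/\beta_{02})}.
\]
I would then bound the two remainder terms. Since $U_n^{(i)}$ converges in distribution and is therefore $O_p(1)$, while $\sqrt{\lambda/(2\beta_i)} = \sqrt{\lambda(\log n)/(2\beta_{0i})} = O(\sqrt{\log n})$, the numerator of the first remainder is $O_p(\sqrt{\log n})$, and dividing by the $\log n$ in the denominator makes this term $O_p(1/\sqrt{\log n}) = o_p(1)$. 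The second remainder is $O_p(1)/\log n = o_p(1)$ at once. Hence $\hat\lambda_W \to \lambda$ in probability, which is the asserted consistency.

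The point requiring care — though it does not turn into a real obstacle — is that $U_n^{(1)}$ and $U_n^{(2)}$ are computed from the same data and are thus dependent, so one cannot simply add variances; however, consistency only needs each term to be $o_p(\log n)$ separately, and the triangle inequality together with the marginal $O_p(\sqrt{\log n})$ bounds suffices. One should also note that the $O_p(1)$ terms in the two WBIC expansions need not be the same function of the data, but since each is $O_p(1)$ so is their difference, and nothing is lost in the subtraction. The essential mechanism is that the denominator $1/\beta_1 - 1/\beta_2$ grows like $\log n$, which dominates the $O_p(\sqrt{\log n})$ stochastic fluctuation and the $O_p(1)$ error, driving the estimator to $\lambda$.
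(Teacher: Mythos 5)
Your proof is correct and follows essentially the same route as the paper's: apply the WBIC asymptotic expansion at both inverse temperatures, cancel the common $\sum_i[-\log p(x_i\mid\theta_*)]$ term, and observe that the denominator $1/\beta_1 - 1/\beta_2 = (\log n)(1/\beta_{01}-1/\beta_{02})$ grows like $\log n$ and so dominates the $O_P(\sqrt{\log n})$ fluctuation, yielding $\hat\lambda_W = \lambda + O_P(1/\sqrt{\log n})$. Your additional remarks on the dependence of the two fluctuation terms and the implicit assumption $\beta_{01}\neq\beta_{02}$ are careful but do not change the argument, which the paper carries out by simply absorbing the $U_n$ term into $O_P(\sqrt{\log n})$.
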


\begin{proof}
From Proposition~\ref{WBIC} we have
\[
WBIC_n(\beta_1) = \sum_{i=1}^n \bigl[-\log p(x_i \mid \theta_*)\bigr] + \frac{\lambda}{\beta_1} + O_P\bigl(\sqrt{\log n}\bigr),
\]
and
\[
WBIC_n(\beta_2) = \sum_{i=1}^n \bigl[-\log p(x_i \mid \theta_*)\bigr] + \frac{\lambda}{\beta_2} + O_P\bigl(\sqrt{\log n}\bigr).
\]
Subtracting these equations and dividing by \(1/\beta_1 - 1/\beta_2\) yields
\[
\frac{WBIC_n(\beta_1) - WBIC_n(\beta_2)}{1/\beta_1 - 1/\beta_2} = \lambda + O_P\!\left(\frac{1}{\sqrt{\log n}}\right).
\]
\end{proof}

Moreover, the following proposition holds~\cite{watanabe2013}.

\begin{proposition}
If the function \(-\frac{1}{n}\sum_{i=1}^n \log p(x_i \mid \theta)\) is not constant with respect to \(\theta\), then the following statements hold:
\begin{enumerate}
    \item \(WBIC_n(\beta)\) is a decreasing function of \(\beta\).
    \item There exists a unique \(\beta^*\) (with \(0 < \beta^* < 1\)) satisfying
    \[
    F_n = WBIC(\beta^*),
    \]
\end{enumerate}
where \(F_n\) denotes the free energy (see equation (\ref{Fn})). Furthermore, the parameter \(\beta^*\) satisfies
\[
\beta^* = \frac{1}{\log n} + o_p\!\left(\frac{1}{\log n}\right).
\]
Here, the error term \(o_p(1/\log n)\) depends on the model, the prior, and the true distribution, whereas the leading term is model-independent.
\end{proposition}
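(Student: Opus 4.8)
Write $L_n(\theta):=-\sum_{i=1}^n\log p(x_i\mid\theta)$ and $Z_n(\beta):=\int_\Theta\varphi(\theta)e^{-\beta L_n(\theta)}\,d\theta$, so that $p_\beta(\theta\mid x_1,\dots,x_n)\propto\varphi(\theta)e^{-\beta L_n(\theta)}$, $WBIC_n(\beta)=\mathcal{E}^\beta_\theta[L_n(\theta)]=-\frac{d}{d\beta}\log Z_n(\beta)$, and, since $\varphi$ is a probability density, $Z_n(0)=1$ while $Z_n(1)=Z_n(x_1,\dots,x_n)=e^{-F_n}$. The plan is to treat $\log Z_n(\beta)$ as the cumulant generating function of $-L_n$ under the prior. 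For (1), differentiating under the integral sign (valid on $[0,1]$ under the mild integrability of $L_n$ and $L_n^2$ against $\varphi$ implicit in the setup) gives $\frac{d}{d\beta}WBIC_n(\beta)=-\mathcal{V}^\beta_\theta[L_n(\theta)]\le 0$; the hypothesis that $-\frac1n\sum_i\log p(x_i\mid\theta)$ is non-constant makes $L_n$ non-constant on a set of positive prior mass, hence $\mathcal{V}^\beta_\theta[L_n(\theta)]>0$ for all $\beta$ and $WBIC_n$ is strictly decreasing (indeed real-analytic) on $[0,1]$. For (2), integrating $\frac{d}{d\beta}\log Z_n(\beta)=-WBIC_n(\beta)$ over $[0,1]$ and using $Z_n(0)=1,\ Z_n(1)=e^{-F_n}$ yields the exact identity $F_n=\int_0^1 WBIC_n(\beta)\,d\beta$; thus $F_n$ is the average of a strictly decreasing continuous function on $[0,1]$, so $WBIC_n(1)<F_n<WBIC_n(0)$, and the intermediate value theorem together with strict monotonicity gives a unique $\beta^*\in(0,1)$ with $WBIC_n(\beta^*)=F_n$.

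For the asymptotics, I would combine Proposition~\ref{WBIC} with the expansion $F_n=\sum_i[-\log p(x_i\mid\theta_*)]+\lambda\log n+O_p(\sqrt{\log n})$, which is Proposition~\ref{WBIC} at $\beta_0=1$ together with $F_n=WBIC_n(1/\log n)+O_p(\sqrt{\log n})$. Fix $\varepsilon\in(0,1)$ and apply Proposition~\ref{WBIC} at the \emph{deterministic} temperatures $\beta_0=1\pm\varepsilon$: then $WBIC_n\!\big(\tfrac{1-\varepsilon}{\log n}\big)-F_n=\lambda\log n\big(\tfrac1{1-\varepsilon}-1\big)+O_p(\sqrt{\log n})$ is positive with probability tending to $1$ (because $\lambda>0$ and $\log n\gg\sqrt{\log n}$), and likewise $WBIC_n\!\big(\tfrac{1+\varepsilon}{\log n}\big)-F_n$ is negative with probability tending to $1$. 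By part (1) ($WBIC_n$ strictly decreasing) and part (2), this forces $\tfrac{1-\varepsilon}{\log n}<\beta^*<\tfrac{1+\varepsilon}{\log n}$ with probability tending to $1$; since $\varepsilon$ is arbitrary, $\log n\cdot\beta^*\to 1$ in probability, i.e. $\beta^*=\tfrac1{\log n}+o_p\!\big(\tfrac1{\log n}\big)$. The closing remark is then clear: the $1/\log n$ leading term arises only from matching the deterministic $\lambda\log n$ pieces, whereas the decay rate of the correction is controlled by the $O_p$ remainders of Proposition~\ref{WBIC}, whose constants depend on $\lambda$, the prior, and the true distribution.

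The main obstacle is this last step: Proposition~\ref{WBIC} is stated for a \emph{fixed} constant $\beta_0$, so one cannot legitimately substitute the random quantity $\beta_0=\log n\cdot\beta^*$ into it. The sandwiching argument above sidesteps this by invoking the expansion only at the two deterministic temperatures $(1\pm\varepsilon)/\log n$ and using monotonicity to trap $\beta^*$ between them; the alternative of proving a version of Proposition~\ref{WBIC} uniform in $\beta_0$ over a compact interval is heavier and unnecessary here. A secondary point requiring care is the justification of differentiation under the integral and the strict positivity of $\mathcal{V}^\beta_\theta[L_n]$, both of which follow from the standing integrability assumptions together with the non-constancy hypothesis.
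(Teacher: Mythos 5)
The paper offers no proof of this proposition; it is stated as a known result with a citation to Watanabe (2013), so there is nothing in the text to compare against line by line. Your argument is correct and in fact reconstructs the standard (essentially Watanabe's original) proof: the identity $WBIC_n(\beta)=-\frac{d}{d\beta}\log Z_n(\beta)$ gives $\frac{d}{d\beta}WBIC_n(\beta)=-\mathcal{V}^\beta_\theta[L_n]<0$ under the non-constancy hypothesis (this is the same computation the paper uses in the proof of Imai's method), the thermodynamic-integration identity $F_n=\int_0^1 WBIC_n(\beta)\,d\beta$ together with the mean value theorem and strict monotonicity yields existence and uniqueness of $\beta^*\in(0,1)$, and your sandwiching at the deterministic temperatures $(1\pm\varepsilon)/\log n$ is the right way to avoid the illegitimate substitution of the random $\beta_0=\beta^*\log n$ into Proposition~\ref{WBIC} --- you correctly identified that as the delicate point.

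One caveat worth flagging: you source the expansion $F_n=\sum_i[-\log p(x_i\mid\theta_*)]+\lambda\log n+O_p(\sqrt{\log n})$ from the relation $F_n=WBIC_n(1/\log n)+O_P(\sqrt{\log n})$ quoted earlier in the paper. In Watanabe's development that relation is itself a consequence of the asymptotics of $\beta^*$ (or at least is proved alongside them), so leaning on it here risks circularity. The clean route is to invoke the free-energy expansion $F_n=\sum_i[-\log p(x_i\mid\theta_*)]+\lambda\log n+O_p(\log\log n)$ directly from singular learning theory (Watanabe, 2009), which is established independently of WBIC; with that substitution your sandwich argument goes through unchanged. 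This is a sourcing issue, not a gap in the logic.
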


\subsection{Imai's Method}\label{imaisec}
\begin{proposition}
Let \(c>0\) be a constant and set \(\beta = c/\log n\). Then,
\[
\hat{\lambda}_I = \beta^2\,\mathcal{V}_\theta^\beta\!\left[\sum_{i=1}^n \log p(x_i \mid \theta)\right]
\]
is a consistent estimator of the learning coefficient \(\lambda\)~\cite{imai2019}.
\end{proposition}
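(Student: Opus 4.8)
The plan is to identify $\hat\lambda_I$ with (the negative of) the $\beta$-derivative of $WBIC_n$, and then to pin down its limit by a direct computation on the normal form of $K$, in the same spirit as Watanabe's analysis of WBIC.

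\emph{Step 1: the derivative identity.} Put $L_n(\theta):=-\sum_{i=1}^n\log p(x_i\mid\theta)$ and let $\psi_n(\beta):=\log\int_\Theta\varphi(\theta)\prod_{i=1}^n p(x_i\mid\theta)^\beta\,d\theta$ be the log-partition function at inverse temperature $\beta$. Differentiating under the integral sign gives $\psi_n'(\beta)=-\mathcal E^\beta_\theta[L_n(\theta)]=-WBIC_n(\beta)$ and $\psi_n''(\beta)=\mathcal E^\beta_\theta[L_n(\theta)^2]-(\mathcal E^\beta_\theta[L_n(\theta)])^2=\mathcal V^\beta_\theta[L_n(\theta)]$. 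Since variance is unchanged by a sign flip, $\mathcal V^\beta_\theta\big[\sum_i\log p(x_i\mid\theta)\big]=\mathcal V^\beta_\theta[L_n(\theta)]=\psi_n''(\beta)$, so
\[
\hat\lambda_I \;=\; \beta^2\,\psi_n''(\beta)\;=\;-\,\beta^2\,\frac{d}{d\beta}WBIC_n(\beta)\;=\;\frac{d}{d(1/\beta)}\,WBIC_n(\beta).
\]
Rewriting Proposition~\ref{WBIC} with $\beta=\beta_0/\log n$ gives $WBIC_n(\beta)=L_n(\theta_*)+\lambda/\beta+(\text{fluctuation of order }\sqrt{\log n})$, i.e. $WBIC_n$ is asymptotically affine in $1/\beta$ with slope $\lambda$; this is exactly what makes Watanabe's difference-quotient estimator $\hat\lambda_W$ consistent, and $\hat\lambda_I$ is its infinitesimal counterpart. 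The catch is that Proposition~\ref{WBIC} controls only the value of $WBIC_n$, whereas its remainder and fluctuation terms need not have small derivatives, so one cannot merely differentiate the expansion.

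\emph{Step 2: localize and rescale.} Writing $\sum_i\log p(x_i\mid\theta)=\sum_i\log q(x_i)-nK_n(\theta)$ with $K_n(\theta):=\frac1n\sum_i\log\{q(x_i)/p(x_i\mid\theta)\}$ the empirical Kullback--Leibler divergence, the $\theta$-independent term drops and $\hat\lambda_I=\beta^2\,\mathcal V^\beta_\theta[\,nK_n(\theta)\,]=\mathcal V^\beta_\theta[\,\beta\,nK_n(\theta)\,]$. Apply Hironaka's resolution of singularities to bring $K$ to normal crossing form $K(g(u))=a(u)\,u_1^{2k_1}\!\cdots u_d^{2k_d}$ with prior-times-Jacobian $\propto|b(u)|\,u_1^{h_1}\!\cdots u_d^{h_d}$ and $\lambda=\min_j (h_j+1)/(2k_j)$. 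For $\beta=\beta_0/\log n$, rescale each coordinate by the appropriate power of $\log n/n$ so that the temperature-$\beta$ posterior of the rescaled parameter $t$ has a nondegenerate limit with density $\propto\prod_j t_j^{h_j}e^{-t^{2k}}$; in this scaling the random variable $\beta\,nK_n(\theta)$, for $\theta$ drawn from the temperature-$\beta$ posterior, converges in distribution to the monomial $t^{2k}$ under the limiting density. Reducing to the single effective coordinate — the one attaining the minimum defining $\lambda$, the others contributing only a fixed constant to leading order — the substitution $Y=t^{2k}$ turns that density into the $\mathrm{Gamma}(\lambda,1)$ density $\propto Y^{\lambda-1}e^{-Y}$, whose mean and variance are \emph{both} equal to $\lambda$. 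Hence $\beta\,\mathcal E^\beta_\theta[nK_n]\to\lambda$ (the leading term of Proposition~\ref{WBIC}, i.e. the slope read off by $\hat\lambda_W$) and $\beta^2\,\mathcal V^\beta_\theta[nK_n]\to\lambda$, i.e. $\hat\lambda_I\to\lambda$ in probability: $\hat\lambda_W$ and $\hat\lambda_I$ recover, respectively, the mean and the variance of one and the same $\mathrm{Gamma}(\lambda,1)$ limit, and these coincide.

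\emph{Main obstacle.} The delicate point is upgrading weak convergence of the rescaled posterior to convergence of the second moment $\mathcal E^\beta_\theta[(\beta\,nK_n)^2]$, i.e. establishing uniform integrability of $(\beta\,nK_n)^2$ under the temperature-$\beta$ posterior. Concretely one must show that the posterior places asymptotically negligible mass outside the region where the normal form is valid, and that the empirical-process fluctuation in $nK_n(\theta)=nK(\theta)-\sqrt n\,\eta_n(\theta)$ is controlled uniformly over the relevant scale; these are precisely the tail and entropy estimates underlying Watanabe's proof of Proposition~\ref{WBIC}, and they go through here under the relatively-finite-variance hypothesis together with standard bounds on the normalized log-likelihood ratio. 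Beyond pole order one the same scheme applies, with the $\mathrm{Gamma}(\lambda,1)$ limit replaced by the appropriate law and the identity (limiting mean) $=$ (limiting variance) $=\lambda$ persisting at the level of leading asymptotics.
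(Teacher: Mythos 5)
Your proposal is correct in substance but follows a genuinely different route from the paper for the key step. Your Step~1 — writing $\hat\lambda_I=\beta^2\psi_n''(\beta)=-\beta^2\,\partial_\beta WBIC_n(\beta)$ via the log-partition function — is exactly the identity the paper uses (its equation relating $\partial WBIC_n/\partial\beta$ to $-\mathcal V^\beta_\theta[\sum_i\log p(x_i\mid\theta)]$). Where you diverge is in how the limit of $-\beta^2\,\partial_\beta WBIC_n$ is identified with $\lambda$: the paper simply takes $\Delta\beta\to 0$ in Watanabe's difference quotient and asserts that the $O_P(1/\sqrt{\log n})$ error survives the limit — i.e.\ it does precisely the thing you flag as insufficient, differentiating an expansion that is only controlled pointwise in $\beta$. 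Your alternative — localizing via resolution of singularities, rescaling so that $\beta\,nK_n$ has a $\mathrm{Gamma}(\lambda,1)$ posterior limit, and reading off $\lambda$ as both the limiting mean (Watanabe's slope) and the limiting variance (Imai's estimator) — is closer to the actual content of Watanabe's singular learning theory and explains \emph{why} the derivative identity yields $\lambda$ rather than inheriting it formally from $\hat\lambda_W$. What the paper's shortcut buys is brevity; what your route buys is an honest account of where the consistency comes from, at the cost of the uniform-integrability and tail estimates you correctly identify as the remaining obstacle (these are essentially the estimates underlying Proposition~\ref{WBIC} itself, including the $\sqrt{t}\,\xi_n(u)$ fluctuation term that your leading-order Gamma computation suppresses and which is the source of the $O_P(1/\sqrt{\log n})$ error). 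Neither argument is fully rigorous as written, but yours makes its gap explicit where the paper's is silent.
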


\begin{proof}
In Watanabe's method, let \(\beta_1 = \beta\) and \(\beta_2 = \beta + \Delta \beta\). Multiplying both the numerator and the denominator by \(\beta(\beta+\Delta\beta)\) yields
\[
-\beta(\beta+\Delta\beta)\frac{WBIC_n(\beta+\Delta\beta) - WBIC_n(\beta)}{\Delta \beta} = \lambda + O_P\!\left(\frac{1}{\sqrt{\log n}}\right).
\]
Taking the derivative in the limit as $\Delta \beta \rightarrow 0$ gives
\begin{equation}\label{imai}
-\beta^2 \frac{\partial WBIC_n(\beta)}{\partial \beta} = \lambda + O_P\!\left(\frac{1}{\sqrt{\log n}}\right).
\end{equation}
On the other hand, differentiating
\[
WBIC_n(\beta) = \frac{\displaystyle \int_\Theta -\sum_{i=1}^n \log p(x_i \mid \theta)\,\varphi(\theta)\prod_{i=1}^n p(x_i \mid \theta)^\beta\,d\theta}{\displaystyle \int_\Theta \varphi(\theta)\prod_{i=1}^n p(x_i \mid \theta)^\beta\,d\theta}
\]
with respect to \(\beta\) yields
\begin{equation}\label{aaa}
\frac{\partial WBIC_n(\beta)}{\partial \beta} = -\mathcal{E}^\beta_\theta\!\left[\Bigl\{\sum_{i=1}^n \log p(x_i \mid \theta)\Bigr\}^2\right] + \Biggl(\mathcal{E}^\beta_\theta\!\left[\sum_{i=1}^n \log p(x_i \mid \theta)\right]\Biggr)^2 = -\mathcal{V}^\beta_\theta\!\left[\sum_{i=1}^n \log p(x_i \mid \theta)\right].
\end{equation}
Substituting \eqref{aaa} into \eqref{imai} gives
\[
\beta^2\,\mathcal{V}^\beta_\theta\!\left[\sum_{i=1}^n \log p(x_i \mid \theta)\right] = \lambda + O_P\!\left(\frac{1}{\sqrt{\log n}}\right).
\]
Thus, \(\hat{\lambda}_I\) is a consistent estimator of \(\lambda\).
\end{proof}

\subsubsection*{Comparison between Watanabe's and Imai's Methods.}
\begin{itemize}
    \item \textbf{watanabe's Method}\\
    Simple to implement, but requires the selection of $\beta_1, \beta_2$, and suffers from numerical instability when $\Delta \beta \rightarrow 0$.
    \item \textbf{imai's Method}\\
    While it resolves the $\Delta \beta \rightarrow 0$ issue and requires fewer parameter choices, it remains unstable with respect to variance.
\end{itemize}

\section{Proposal of an Estimator Using Empirical Loss}
In this section, we propose an estimator for the learning coefficient by the estimator
\[
\hat{\lambda}_{T} = \frac{WBIC_n - n\,T_n}{\log n}\,,
\]
The empirical loss \(T_n\) is closely related to the generalization loss and offers an alternative measure of model complexity. While WBIC captures asymptotic free energy behavior, it does not directly reflect model generalization. By incorporating \(T_n\), we introduce an regularization effect that stabilizes the estimation of \(\lambda\). This method avoids the selection of multiple inverse temperatures, as in Watanabe's method, and reduces variance instability, as seen in Imai's method.

\begin{proposition}
\(\hat{\lambda}_{T}\) is a consistent estimator of the learning coefficient \(\lambda\).
\end{proposition}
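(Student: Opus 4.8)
The plan is to establish the asymptotic expansions of both $WBIC_n$ (evaluated at the ``natural'' inverse temperature $\beta = 1/\log n$, which is implicit in the notation $WBIC_n$) and the empirical loss $n\,T_n$, and then show that their difference, divided by $\log n$, equals $\lambda + o_P(1)$. From Proposition~\ref{WBIC} with $\beta_0 = 1$, we have
\[
WBIC_n = \sum_{i=1}^n\bigl[-\log p(x_i \mid \theta_*)\bigr] + \lambda \log n + U_n\sqrt{\tfrac{\lambda \log n}{2}} + O_P(1),
\]
so the leading term $\sum_i[-\log p(x_i\mid\theta_*)]$ grows like $n$ and must be cancelled by the corresponding term in $n\,T_n$.

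First I would recall Watanabe's asymptotic result for the empirical loss. In Watanabe's singular learning theory, the empirical loss $T_n$ (the quantity appearing in WAIC) satisfies
\[
n\,T_n = \sum_{i=1}^n\bigl[-\log p(x_i\mid\theta_*)\bigr] + (\lambda - \nu)\,? \;+\; O_P(1)\cdot\text{(lower order)},
\]
but more precisely the relevant fact is that $n\,T_n$ has the same $O(n)$ leading term $\sum_i[-\log p(x_i\mid\theta_*)]$ plus fluctuations of order $O_P(\sqrt{n})$ and a constant-order correction, with no term of order $\log n$. That is, $n\,T_n = \sum_{i=1}^n[-\log p(x_i\mid\theta_*)] + O_P(\sqrt{n})$. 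The key point is the mismatch in scales: $WBIC_n$ carries a genuine $\lambda\log n$ term, while $n\,T_n$ does not. Subtracting,
\[
WBIC_n - n\,T_n = \lambda\log n + U_n\sqrt{\tfrac{\lambda\log n}{2}} + O_P(\sqrt{n}),
\]
and here I would need to be careful: an $O_P(\sqrt n)$ error divided by $\log n$ does \emph{not} vanish. So the correct version of the empirical-loss expansion must actually state that the $O_P(\sqrt n)$-order fluctuations in $n\,T_n$ coincide (to leading order) with those in $\sum_i[-\log p(x_i\mid\theta_*)]$ and in $WBIC_n$, so that in the difference $WBIC_n - n\,T_n$ the $\sqrt n$-scale noise cancels, leaving only terms of order $\log n$, $\sqrt{\log n}$, and $O_P(1)$. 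Establishing this cancellation — i.e. that $WBIC_n - n\,T_n = \lambda\log n + O_P(\sqrt{\log n})$ — is the crux.

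Concretely, I would invoke the joint asymptotic expansions from \cite{watanabe2010, watanabe2013} in terms of the renormalized quantities: both $WBIC_n - n S_n$ and $n T_n - n S_n$ (where $n S_n := \sum_i[-\log p(x_i\mid\theta_*)]$ is the empirical entropy at the optimal parameter) admit expansions whose leading terms are $O(\log n)$ and $O(1)$ respectively, governed by $\lambda$ and the singular-fluctuation constant, with the $O_P(\sqrt n)$ empirical-process terms being identical and hence cancelling. Then
\[
\hat\lambda_T = \frac{WBIC_n - n T_n}{\log n} = \lambda + O_P\!\left(\frac{1}{\sqrt{\log n}}\right) \xrightarrow{P} \lambda,
\]
which is the claimed consistency.

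The main obstacle, as flagged above, is verifying that the $O_P(\sqrt n)$ sampling fluctuations genuinely cancel in $WBIC_n - n T_n$ rather than merely being ``lower order'' — since $\sqrt n / \log n \to \infty$, a naive bound is useless here. This requires the finer expansions of both WBIC and empirical loss around the empirical entropy $n S_n$ (not merely around the population quantities), which are available in Watanabe's theory under the relatively-finite-variance assumption; the bulk of the work is citing and correctly assembling those two expansions so the dominant noise terms match. A secondary, routine point is confirming that the notation $WBIC_n$ (without an explicit argument) refers to $WBIC_n(\beta)$ at $\beta = 1/\log n$, consistent with the free-energy relation $F_n = WBIC_n(1/\log n) + O_P(\sqrt{\log n})$ quoted earlier, so that Proposition~\ref{WBIC} applies with $\beta_0 = 1$.
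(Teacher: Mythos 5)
Your proposal is correct and follows essentially the same route as the paper: both rely on Watanabe's expansion of $n\,T_n$ around $\sum_{i=1}^n[-\log p(x_i\mid\theta_*)]$ with an $O_P(1)$ remainder, together with the WBIC expansion from Proposition~\ref{WBIC}, so that the shared empirical term (and hence the $O_P(\sqrt n)$ noise) cancels exactly and the difference divided by $\log n$ gives $\lambda + O_P(1/\sqrt{\log n})$. The "crux" you flag — that the remainder in the empirical-loss expansion is genuinely $O_P(1)$ rather than $O_P(\sqrt n)$ — is precisely what the paper resolves by citing Watanabe's expansion (its equation for $n\,T_n$), whose correction term $\lambda - \tfrac12\mathcal{E}_{t,u}[\sqrt{t}\,\xi_n(u)\mid\xi_n] - \tfrac12\sum_i\mathcal{V}(x_i)$ is bounded in probability.
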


\begin{proof}
For a function \(F(t,u)\), we define the posterior expectation with respect to the “integrated” posterior distribution as
\[
\mathcal{E}_{t,u}\Bigl[F(t,u)\,\Big|\,\xi_n\Bigr] := \frac{\displaystyle \int du^* \int_0^\infty dt\, F(t,u)\,t^{\lambda-1}\exp\Bigl(-t+\sqrt{t}\,\xi_n(u)\Bigr)}{\displaystyle \int du^* \int_0^\infty dt\, t^{\lambda-1}\exp\Bigl(-t+\sqrt{t}\,\xi_n(u)\Bigr)}\,.
\]
Similarly, we define the corresponding posterior variance by
\[
\mathcal{V}_{t,u}\Bigl[F(t,u)\,\Big|\,\xi_n\Bigr] := \mathcal{E}_{t,u}\Bigl[F(t,u)^2\,\Big|\,\xi_n\Bigr] - \Bigl(\mathcal{E}_{t,u}\Bigl[F(t,u)\,\Big|\,\xi_n\Bigr]\Bigr)^2\,.
\]
(For further details, see~\cite{watanabe2018}.) Note that as \(n\to\infty\), \(\mathcal{E}_{t,u}\bigl[\cdot\,|\,\xi_n\bigr]\) converges to \(\mathcal{E}_\theta\bigl[\cdot\,|\,x_1,\dots,x_n\bigr]\) (see Equation (\ref{E})). Furthermore, define the posterior expectation and posterior variance of the negative log-likelihood by
\[
\mathcal{E}(x) := \mathcal{E}_{t,u}\Bigl[-\log p\bigl(x \mid g(u)\bigr)\,\Big|\,\xi_n\Bigr]\quad\text{and}\quad \mathcal{V}(x) := \mathcal{V}_{t,u}\Bigl[-\log p\bigl(x \mid g(u)\bigr)\,\Big|\,\xi_n\Bigr]\,.
\]
The empirical loss admits the following asymptotic expansion~\cite{watanabe2018}:
\begin{equation}\label{Tn1}
n\,T_n = \sum_{i=1}^n \bigl[-\log p(x_i \mid \theta_*)\bigr] + \lambda - \frac{1}{2}\,\mathcal{E}_{t,u}\Bigl[\sqrt{t}\,\xi_n(u) \,\Big|\,\xi_n\Bigr] - \frac{1}{2}\sum_{i=1}^n \mathcal{V}(x_i) + o_P(1)\,.
\end{equation}
Moreover, by Proposition~\ref{WBIC}, WBIC admits the following expansion
\begin{equation}\label{WBIC2}
WBIC_n = \sum_{i=1}^n \bigl[-\log p(x_i \mid \theta_*)\bigr] + \lambda\,\log n + O_P\bigl(\sqrt{\log n}\bigr)\,.
\end{equation}
Noting that the term
\[
\lambda - \frac{1}{2}\,\mathcal{E}_{t,u}\Bigl[\sqrt{t}\,\xi_n(u) \,\Big|\,\xi_n\Bigr] - \frac{1}{2}\sum_{i=1}^n \mathcal{V}(x_i) + o_P(1)
\]
in Equation (\ref{Tn1}) is \(O_P(1)\), subtracting Equation (\ref{Tn1}) from Equation (\ref{WBIC2}) yields
\[
WBIC_n - n\,T_n = \lambda\,\log n + O_P\bigl(\sqrt{\log n}\bigr)\,.
\]
Dividing both sides by \(\log n\) then gives
\[
\frac{WBIC_n - n\,T_n}{\log n} = \lambda + O_P\!\Bigl(\frac{1}{\sqrt{\log n}}\Bigr)\,.
\]
Consequently, \(\hat{\lambda}_{T}\) is a consistent estimator of the learning coefficient \(\lambda\).
\end{proof}

\section{Numerical Experiments}\label{suuchi}
\subsection{Analysis of Bias and Variance for Each Method}\label{suuchi1}
We conducted experiments under several settings to compare the performance of Watanabe's method, Imai's method, and the proposed method. For the regular model, we used:
\begin{itemize}
    \item True distribution: the normal distribution \(N(0,1)\).
    \item Statistical model: the normal distribution \(N(\mu,\sigma^2)\).
    \item For each sample size (as shown on the horizontal axis), 500 estimates of the learning coefficient were computed and then averaged.
\end{itemize}
(see Figure~\ref{image1}, left). For the singular model, we used:
\begin{itemize}
    \item True distribution: the Poisson distribution \(Po(3)\).
    \item Statistical model: a two-component mixture Poisson distribution defined by 
    \[
    p(\cdot \mid \theta,\lambda_1,\lambda_2) = \theta\,Po(\lambda_1) + (1-\theta)\,Po(\lambda_2)\,,
    \]
    \item For each sample size (as shown on the horizontal axis), 300 estimates of the learning coefficient were generated and averaged.
\end{itemize}
(see Figure~\ref{image1}, right). Figure~\ref{image1} shows the simulation results.
\clearpage
\begin{figure}[htbp]
\centering
\begin{minipage}[b]{0.49\columnwidth}
    \centering
    \includegraphics[width=0.8\columnwidth]{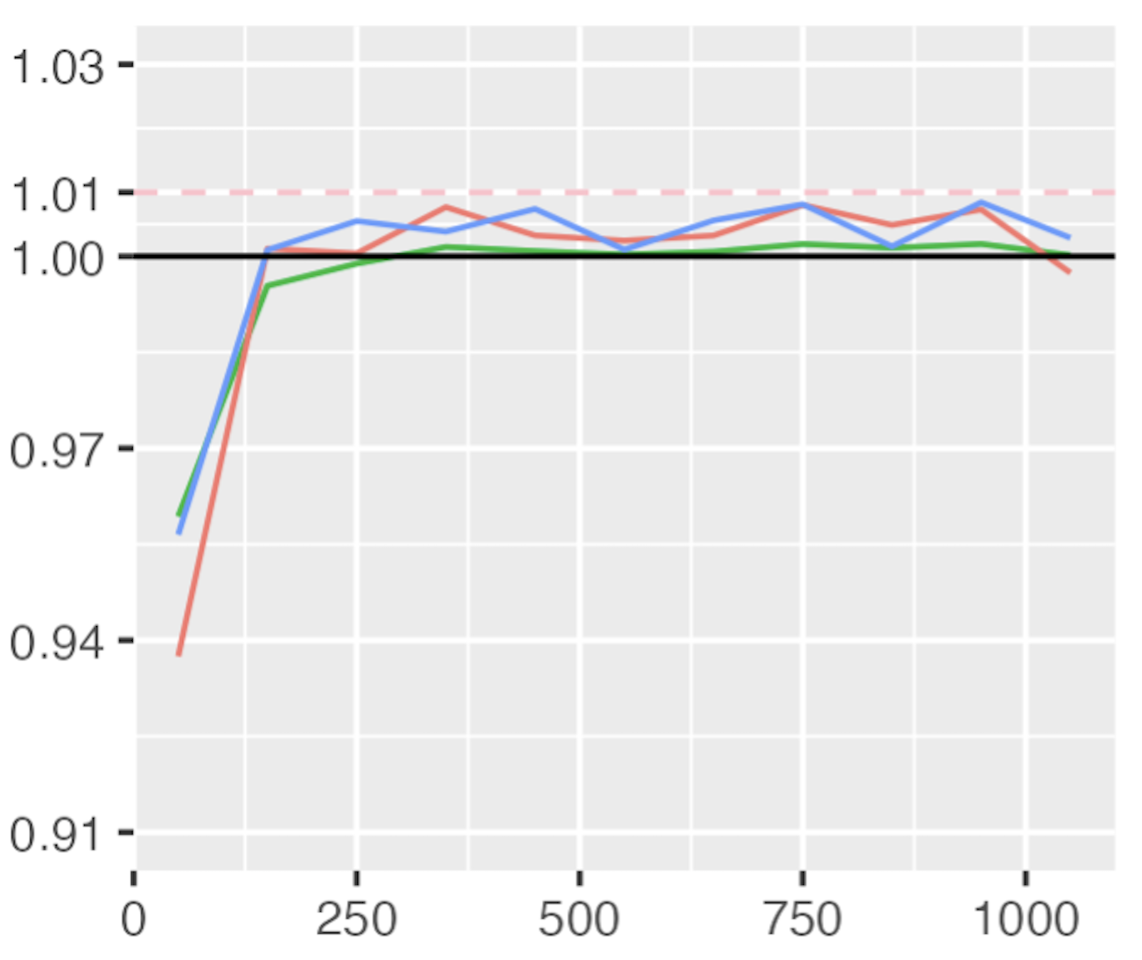}
\end{minipage}
\begin{minipage}[b]{0.49\columnwidth}
    \centering
    \includegraphics[width=0.95\columnwidth]{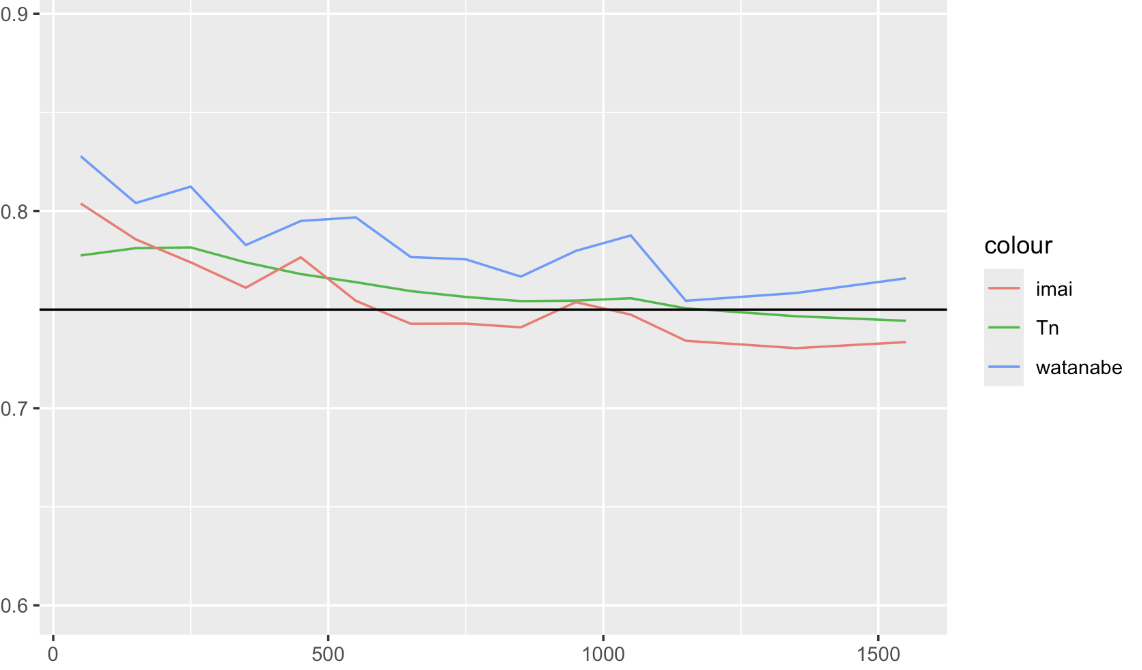}
\end{minipage}
\caption{Graph of learning coefficient estimates versus sample size (horizontal axis) for each method. Left: Example of a regular model using a normal distribution; Right: Example of a singular model using a two-component mixture Poisson distribution.}
\label{image1}
\end{figure}

Table~\ref{table1} shows that the estimates of the learning coefficient obtained using the normal and Poisson distributions closely approximate the true learning coefficient across Watanabe\textquoteright s method, Imai\textquoteright s method, and the proposed method. Furthermore, in the Poisson case (Figure~\ref{image1}, right) the proposed method shows a significant reduction in variance. To verify this, we set the sample size to 750 and compiled a table summarizing the means and variances (Table~\ref{table1}). In this experiment, the true distribution follows a Poisson distribution with mean 3, and the statistical model is a two-component mixture Poisson distribution. The true learning coefficient is 0.75.
\begin{table}[htbp]
    \centering
    \begin{tabular}{cccccc}
        \toprule
         & \(\hat{\lambda}_W(1,1.5)\) & \(\hat{\lambda}_W(1,3)\) & \(\hat{\lambda}_W(1,5)\) & \(\hat{\lambda}_W(1,1000)\) \\
        \midrule
        Mean   & 0.7766225 & 0.7727747 & 0.7732914 & 0.8082598 \\
        Bias   & 0.0266225 & 0.0227747 & 0.0232914 & 0.0582598 \\
        Variance & 0.0529603 & 0.0112726 & 0.0074551 & \textcolor{red}{0.0071719} \\
        MSE    & 0.05356317 & 0.06675754 & 0.01294827 & 0.01360033 \\
    \end{tabular}
    \bigskip
    \begin{tabular}{cccc}
        \toprule
         & \(\hat{\lambda}_{I}\) & \(\hat{\lambda}_{T}\)  \\
        \midrule
        Mean   & \textcolor{red}{0.7540509} & \textcolor{red}{0.7580027}  \\
        Bias   & \textcolor{red}{0.0040509} & \textcolor{red}{0.0080027}  \\
        Variance & 0.0172270 & \textcolor{red}{0.0043311}  \\
        MSE    & 0.01720898 & \textcolor{red}{0.00438658} \\
        \bottomrule
    \end{tabular}
    \caption{In an experiment in which the true distribution follows a Poisson distribution with mean 3 and the statistical model is a two-component mixture Poisson distribution, the learning coefficient was estimated and the Mean, Bias, Variance, and MSE were measured. In the upper table, Watanabe\textquoteright s method is presented with two inverse temperature values \(\beta_1\) and \(\beta_2\) defined as \(\hat{\lambda}_W(\beta_{01},\beta_{02})\) with \(\beta_1 = \beta_{01}/\log n\) and \(\beta_2 = \beta_{02}/\log n\).}
    \label{table1} 
\end{table}

Table~\ref{table1} indicates that the estimators with the lowest bias are \(\hat{\lambda}_{I}\) and \(\hat{\lambda}_{T}\), and that $\hat{\lambda}_{T}$ exhibits the lowest variance and MSE. Moreover, it is evident that for \(\hat{\lambda}_W\) the variance decreases as the gap between the \(\beta\) values increases, whereas if the gap becomes excessively large, the bias increases. From these experiments, it appears that estimating the learning coefficient using the empirical loss \(T_n\) has a particular advantage in terms of variance. Below, we discuss the reasons for this. The proposed estimator
\[
\hat{\lambda}_{T} = \frac{WBIC_n - \textcolor{red}{n\,T_n}}{\log n}
\]
differs fundamentally from Watanabe\textquoteright s method,
\[
\hat{\lambda}_W = \frac{WBIC_n(\beta_1) - \textcolor{red}{WBIC_n(\beta_2)}}{1/\beta_1 - 1/\beta_2}\,,
\]
in that the second WBIC in the numerator is replaced by the empirical loss. A possible explanation for the smaller variance of \(\hat{\lambda}_{T}\) is that the variance of the empirical loss \(n\,T_n\) is lower than that of WBIC itself. To test this, we measured the variances of WBIC and the empirical loss, as shown in Table~\ref{table2}.

\begin{table}[htbp]
    \centering
    \scalebox{0.70}{
    \begin{tabular}{ccccc}
        \toprule
         Variance of \(n\,T_n\) & Variance of \(WBIC\Bigl(\frac{1}{\log n}\Bigr)\) & Variance of \(WBIC\Bigl(\frac{1.5}{\log n}\Bigr)\) & Variance of \(WBIC\Bigl(\frac{3}{\log n}\Bigr)\) & Variance of \(WBIC\Bigl(\frac{1000}{\log n}\Bigr)\) \\
        \midrule
         314.4476 & 312.6105 & 315.236 & 317.9765 & 315.8784 \\
        \bottomrule
    \end{tabular}
    }
    \caption{Measured variances of the empirical loss and WBIC.}
    \label{table2}
\end{table}

However, Table~\ref{table2} reveals no significant difference between the variances of WBIC and the empirical loss itself. Next, under the same conditions as in Table~\ref{table1}, we computed
\setlength{\jot}{10pt}
\begin{align*}
         V[\hat{\lambda}_W] &= V\left[\frac{WBIC_n\left(\beta_1\right)-WBIC_n\left(\beta_2\right)}{1 / \beta_1-1 / \beta_2}\right] \\
         &= \frac{V[WBIC_n(\beta_1)]-2cov(WBIC_n(\beta_1), WBIC_n(\beta_2)) + V[WBIC_n(\beta_2)]}{(1 / \beta_1-1 / \beta_2)^2}\\
        V[\hat{\lambda}_T]&=V\left[\frac{WBIC_n-n T_n}{\log n}\right] \\
        &= \frac{V[WBIC_n] - 2cov(WBIC_n, n T_n) + V[n T_n]}{(\log n)^2}
\end{align*}

Table~\ref{table3} presents the numerator and denominator values for the variances of \(\hat{\lambda}_W\) and \(\hat{\lambda}_T\) computed independently.

\begin{table}[htbp]
    \centering
    \begin{tabular}{ccccccc}
        \toprule
         Variance & \(\hat{\lambda}_W(1,1.5)\) & \(\hat{\lambda}_W(1,3)\) & \(\hat{\lambda}_W(1,5)\) & \(\hat{\lambda}_T\)  \\
        \midrule
         Numerator Value & 0.318103 & 0.268643 & 0.260338 & 0.269056 \\
         Denominator Value & 4.869485 & 19.47794 & 28.04824 & \textcolor{red}{43.82537} \\
         Variance \(=\) Numerator/Denominator & 0.065325 & 0.013792 & 0.009281 & 0.006139 \\
        \bottomrule
    \end{tabular}
    \caption{Separate calculations of the numerator and denominator for the variances of \(\hat{\lambda}_W\) and \(\hat{\lambda}_T\).}
    \label{table3}
\end{table}

From Table~\ref{table3}, although the numerator values do not differ significantly, the denominator for \(\hat{\lambda}_W\) increases markedly as the gap between the \(\beta\) values increases, thereby contributing to a reduction in the overall variance (variance \(=\) numerator/denominator). Moreover, the proposed estimator \(\hat{\lambda}_T\) attains the smallest variance because its denominator is the single term \(\log n\), which does not decrease. This is one plausible explanation for the reduced variance of \(\hat{\lambda}_T\).

\subsection{Relationship between Watanabe's Method and Imai's Method}
As described in Section~\ref{imaisec}, Imai's method is obtained by letting the gap between the \(\beta\) values in Watanabe's method approach zero, thereby effectively taking a derivative. Empirically, it has been confirmed that the proposed method exhibits a smaller variance than Imai's method; however, because Imai's method is essentially a derivative, it is difficult to decompose it into numerator and denominator components as in the analysis of Watanabe's method in Section~\ref{suuchi1}. Therefore, in this section we analyze Imai's method—not with a formal theoretical proof but by computing the bias and variance of the learning coefficient as a function of the gap between the \(\beta\) values in Watanabe's method.

\begin{figure}[htbp]
\centering
\begin{minipage}[b]{0.8\columnwidth}
    \centering
    \includegraphics[width=0.95\columnwidth]{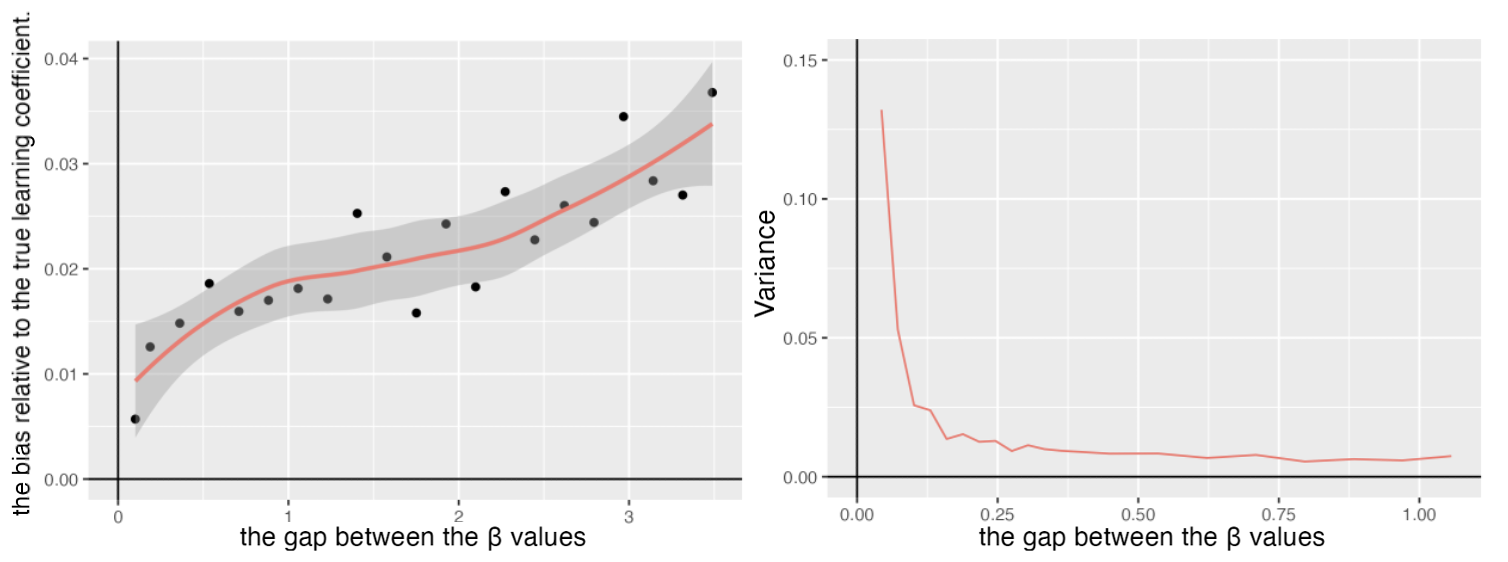}
\end{minipage}
\caption{For Watanabe's method with \(\beta_1 = 1/\log n\) and \(\beta_2 > \beta_1\), the horizontal axis represents the gap \(\beta_2-\beta_1\). The left panel shows the bias measured for each \(\beta\) gap, while the right panel shows the corresponding variance.}
\label{image5}
\end{figure}

Figure~\ref{image5} displays the learning coefficient computed when the true distribution is a Poisson distribution \(Po(3)\) and the statistical model is a two-component mixture Poisson distribution. In this experiment, using Watanabe's method we set \(\beta_1 = 1/\log n\) and allow \(\beta_2\) to exceed \(\beta_1\), with the horizontal axis representing the gap \(\beta_2-\beta_1\). The vertical axis in the left panel represents the bias corresponding to each gap, while the right panel shows the measured variance. This graph confirms that within Watanabe's method, as the gap between the \(\beta\) values decreases, the variance increases while the bias decreases. Interpreting Imai's method as the limit in which the gap tends to zero, the bias is minimized but the variance becomes large. This suggests that Imai's method is an estimator that maintains low bias at the expense of increased variance. Moreover, as indicated in Table~\ref{table1}, the proposed method exhibits nearly the same bias as Imai's method while achieving superior variance performance. These results thus suggest that the method based on empirical loss is more stable in terms of both variance and bias than both Imai's method and Watanabe's method.

\subsection{Effect of MCMC Outliers on Imai's Method and the Empirical Loss-Based Method}
MCMC sampling can produce outliers; although it is technically possible to remove such outliers, it is difficult to eliminate them entirely. Therefore, an estimator that is robust even in the presence of outliers is desirable. In this subsection, we first describe the nature of these outliers and then examine how they affect the behavior of the learning coefficient estimates produced by Imai's method and the proposed method.

The following conditions were used:
\begin{itemize}
    \item True distribution: \(N(0,1)\).
    \item Statistical model: a four-component Gaussian mixture.
    \item Number of training samples: 1500.
    \item Number of MCMC iterations: 4000.
    \item True learning coefficient: 1.25.
    \item Inverse temperature: \(\beta = 1/\log n\).
\end{itemize}

Under these conditions, while measuring the learning coefficient estimates via Imai's method and the proposed method, we obtained the following results (Table~\ref{table5}):
\begin{table}[htbp]
    \centering
    \begin{tabular}{ccc}
        \toprule
        & \(\hat{\lambda}_T\) & \(\hat{\lambda}_I\) \\
        \midrule
        183 & 1.189898 & 1.313025 \\
        184 & 1.593850 & 4.066127 \\
        185 & 1.203485 & 1.271636 \\
        \bottomrule
    \end{tabular}
    \caption{}
    \label{table5}
\end{table}

As shown in Table~\ref{table5}, at iteration 184 Imai's method produces an abnormally large learning coefficient compared to the proposed method. To investigate the cause, we examined the sum of the log-likelihoods
\[
\sum_{i=1}^n \log p(x_i \mid \theta_k)
\]
computed from the MCMC samples \(\{\theta_k\}_{k=1}^{4000}\). The posterior mean of these 4000 sums is approximately given by
\[
\mathcal{E}_\theta^{\beta=1/\log n}\!\left[\sum_{i=1}^n \log p(x_i \mid \theta)\right] \simeq \frac{1}{4000} \sum_{k=1}^{4000} \Biggl(\sum_{i=1}^n \log p(x_i \mid \theta_k)\Biggr),
\]
and the sample variance is approximated by
\[
\mathcal{V}_\theta^{\beta=1/\log n}\!\left[\sum_{i=1}^n \log p(x_i \mid \theta)\right] \simeq \frac{1}{4000} \sum_{k=1}^{4000} \left\{\sum_{i=1}^n \log p(x_i \mid \theta_k) - \frac{1}{4000} \sum_{k=1}^{4000} \Biggl(\sum_{i=1}^n \log p(x_i \mid \theta_k)\Biggr)\right\}^2.
\]
Figure~\ref{image6} displays the graph of the sum of the log-likelihoods for each iteration; the left panel shows a typical iteration (e.g., iteration 183), and the right panel displays iteration 184.
\begin{figure}[htbp]
\centering
\begin{minipage}[b]{0.8\columnwidth}
    \centering
    \includegraphics[width=0.95\columnwidth]{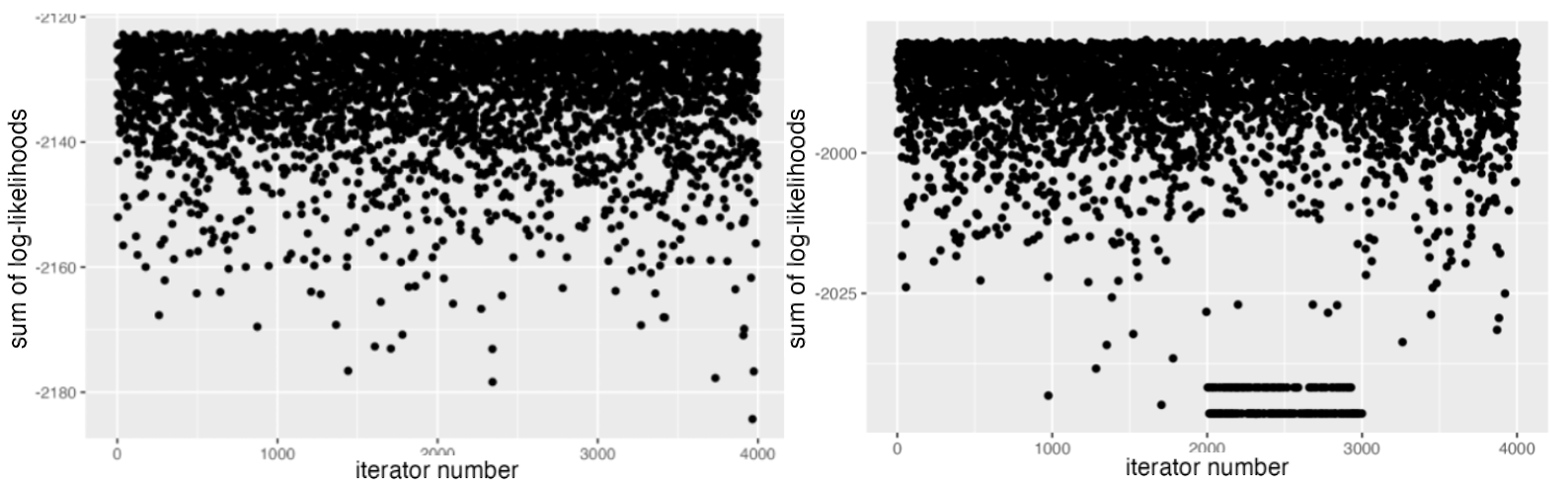}
\end{minipage}
\caption{Graph of the sum of log-likelihoods per iteration. Left: a typical iteration (e.g., iteration 183); Right: iteration 184.}
\label{image6}
\end{figure}

The abnormal value observed in Table~\ref{table5} can be attributed to anomalies in the MCMC sampling, as evidenced by Figure~\ref{image6}. Next, we investigate why, when such outliers occur, Imai's method deviates significantly from the true learning coefficient.

Figure~\ref{image8} (left panel) shows an example in which both Imai's method and the proposed method yield learning coefficient estimates close to the theoretical value, while the right panel shows the same graph with 50 artificial outlier points (red dots) added (from iteration 4001 to 4050). Figure~\ref{image7} illustrates the effect on the learning coefficient estimates as these 50 red points are gradually shifted downward. As seen in Figure~\ref{image7}, Imai's method deviates from the true learning coefficient in a quadratic manner as the outliers become more extreme, whereas the proposed method deviates only linearly. This is because the proposed method computes the average of \(\sum_{i=1}^n \log p(x_i \mid \theta)\), while Imai's method computes its variance. Consequently, the proposed method is more robust to outliers in the MCMC sampling (e.g., from Stan) than Imai's method.

\begin{figure}[htbp]
\centering
\begin{minipage}[b]{0.8\columnwidth}
    \centering
    \includegraphics[width=0.95\columnwidth]{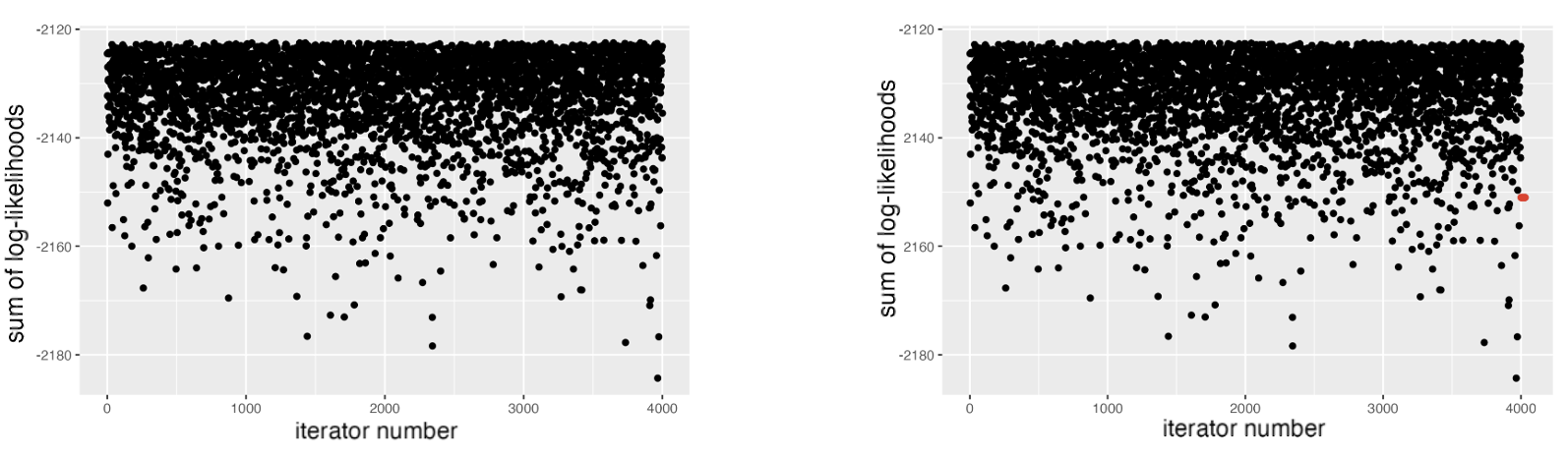}
\end{minipage}
\caption{Graph of the sum of log-likelihoods for each iteration. Left: the original graph; Right: the graph after 50 artificial outlier points (red dots) have been added (from iteration 4001 to 4050).}
\label{image8}
\end{figure}

\begin{figure}[htbp]
\centering
\begin{minipage}[b]{0.7\columnwidth}
    \centering
    \includegraphics[width=0.8\columnwidth]{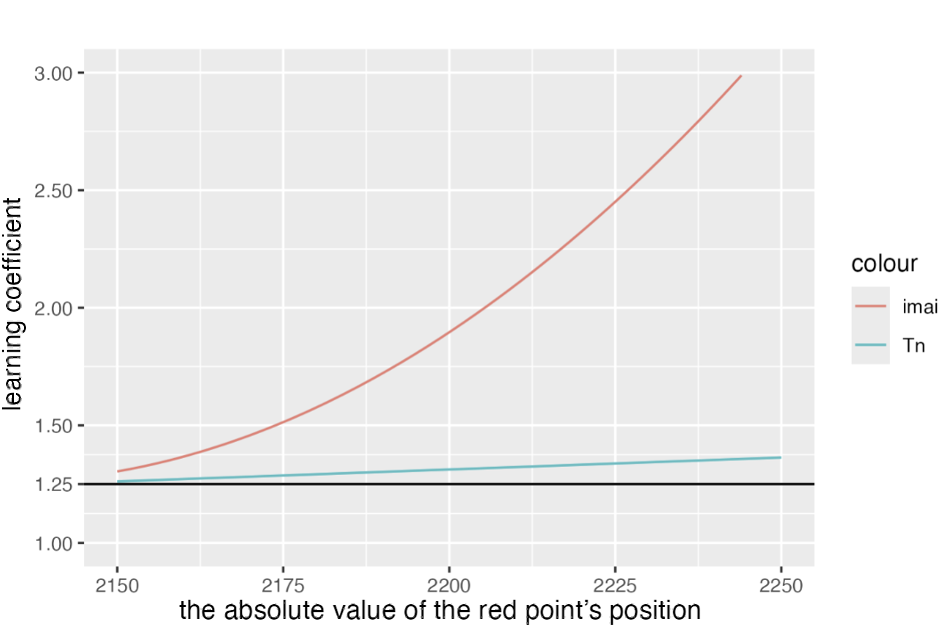}
\end{minipage}
\caption{Change in the estimated learning coefficient as the red outlier points are gradually shifted downward. The black line indicates the true learning coefficient, 1.25.}
\label{image7}
\end{figure}

\section{Conclusion}
The proposed estimator \(\hat{\lambda}_T\) does not require the tuning of the parameter \(\beta\), unlike \(\hat{\lambda}_W\). Moreover, \(\hat{\lambda}_T\) not only approximates the true learning coefficient as accurately as \(\hat{\lambda}_W\) and \(\hat{\lambda}_I\) on average, but also exhibits superior performance in terms of variance. In the comparison between \(\hat{\lambda}_W\) and \(\hat{\lambda}_T\), the former suffers from a small effective denominator due to the difference between \(1/\beta\) values, whereas the latter benefits from a stable, larger denominator consisting of the single term \(\log n\), which contributes to its lower variance. Furthermore, an analysis based on Watanabe's method suggests that Imai's method, while achieving a smaller bias, incurs a substantially larger variance. In addition, regarding the effect of outliers from MCMC sampling, \(\hat{\lambda}_I\) is affected quadratically, whereas \(\hat{\lambda}_T\) is affected only linearly.

\section{Future Work}
In this study, we have demonstrated that the learning coefficient estimator based on empirical loss exhibits superior performance in terms of both variance and bias. However, the theoretical basis for this improved performance is not yet fully elucidated, and further investigation is required to clarify the underlying reasons. Moreover, this study was conducted under a specific prior distribution; a systematic examination of the influence of different prior distributions on the estimation of the learning coefficient remains an important direction for future research.

\addcontentsline{toc}{chapter}{References} 
\renewcommand{\bibname}{References}
\bibliography{takio}

\begin{thebibliography}{10}

\bibitem{watanabe2010}
Sumio Watanabe.
\newblock {A}symptotic {E}quivalence of {B}ayes {C}ross {V}alidation and {W}idely {A}pplicable {I}nformation {C}riterion in {S}ingular {L}earning {T}heory, 2010.

\bibitem{watanabe2013}
Sumio Watanabe.
\newblock A {W}idely {A}pplicable {B}ayesian {I}nformation {C}riterion, mar 2013.
\newblock J. Mach. Learn. Res.

\bibitem{watanabe2009algebraic}
Sumio Watanabe.
\newblock Algebraic geometry and statistical learning theory, 2009.
\newblock Cambridge University Press.

\bibitem{imai2019}
Toru Imai.
\newblock Estimating {R}eal {L}og {C}anonical {T}hresholds, 2019.
\newblock arXiv: Methodology.

\bibitem{Aoyaginormal}
Miki Aoyagi and Kenji Nagata.
\newblock A {B}ayesian {L}earning {C}oefficient of {G}eneralization {E}rror and {V}andermonde {M}atrix-{T}ype {S}ingularities, 02 2012.
\newblock Neural {C}omputation.

\bibitem{mixpoisson}
Kenichiro Sato and Sumio Watanabe.
\newblock Bayesian {G}eneralization {E}rror of {P}oisson {M}ixture and {S}implex {V}andermonde {M}atrix {T}ype {S}ingularity, 2019.

\bibitem{watanabe2018}
Sumio Watanabe.
\newblock Mathematical theory of bayesian statistics, 2018.

\bibitem{drton2016bayesianinformationcriterionsingular}
Mathias Drton and Martyn Plummer.
\newblock A {B}ayesian information criterion for singular models, 2016.

\bibitem{AOYAGI2005924}
Miki Aoyagi and Sumio Watanabe.
\newblock Stochastic complexities of reduced rank regression in {B}ayesian estimation, 2005.
\newblock Neural Networks.

\bibitem{joe}
Joe Suzuki.
\newblock {WAIC} and {WBIC} with {R} {S}tan: 100 {E}xercises for {B}uilding {L}ogic, 2023.

\bibitem{liu2024learning}
Lirui Liu and Joe Suzuki.
\newblock Learning under singularity: An information criterion improving {WBIC} and {sBIC}, 2024.
\newblock Japanese Journal of Statistics and Data Science.

\end{thebibliography}
\bibliographystyle{unsrt}
    \nocite{drton2016bayesianinformationcriterionsingular}
    \nocite{Aoyaginormal}
    \nocite{mixpoisson}
    \nocite{AOYAGI2005924}
    \nocite{watanabe2013}
    \nocite{imai2019}
    \nocite{joe}
    \nocite{liu2024learning}
    \nocite{watanabe2009algebraic}
    \nocite{watanabe2010}

\end{document}